\newcommand{\rottriangle}{
	\begin{sideways}\begin{sideways}      \begin{sideways}$\triangle$\end{sideways}\end{sideways}\end{sideways}}
\DeclareMathOperator*{\argmax}{arg\,max}
\title{Energy Regularized RNNs for Solving Non-Stationary Bandit Problems}
\name{Michael Rotman\quad ~~\quad ~~\quad Lior Wolf}
\address{School of Computer Science, Tel Aviv University}
\begin{document}
	%
	\maketitle
	\begin{abstract}
		We consider a Multi-Armed Bandit problem in which the rewards are non-stationary and are dependent on past actions and potentially on past contexts. At the heart of our method, we employ a recurrent neural network, which models these sequences. 
		In order to balance between exploration and exploitation, we present an energy minimization term that prevents the neural network from becoming too confident in support of a certain action. This term provably limits the gap between the maximal and minimal probabilities assigned by the network. In a diverse set of experiments, we demonstrate that our method is at least as effective as methods suggested to solve the sub-problem of Rotting Bandits, and can solve intuitive extensions of various benchmark problems. We share our implementation at
		\href{https://github.com/rotmanmi/Energy-Regularized-RNN}{https://github.com/rotmanmi/Energy-Regularized-RNN}.
	\end{abstract}

	\section{Introduction}
	
	The clich\'{e} ``insanity is doing the same thing over and over again and expecting different results'' is wrong, since our experience tells us that we cannot exploit the same action repeatedly and expect to enjoy the same outcome. While in the conventional Multi-Armed Bandits (MAB) setting~\cite{thompson1933likelihood}, the reward distribution of each arm is assumed to be stationary, real-world scenarios, such as online advertising and content recommendation have led to more general settings. For example, in Rotting Bandits~\cite{levine2017rotting}, the reward decays in accordance with the number of times that an arm has been pulled. Rotting Bandits, however, do not address the cases in which the reward is dependent on the complete history of the arm pulling actions, which also takes into account the pulling of other arms, as well as the order of the actions.

	To provide a generic treatment, we employ a Recurrent Neural Network (RNN) in order to model the non-stationary reward distributions. For tasks in which the decision at each round benefits from a set of observations, the time-dependent context is fed as the input to the RNN. Thus, our approach naturally extends a non-stationary form of Contextual Bandits~\cite{langford2007epoch}. Unlike the conventional Contextual Bandit case, the optimal action predictions are  not only conditioned on the most recent context, but also on all previous contexts.
	
	The learned policy selects the action based on a Softmax that is applied to a set of logits. Similar to what is observed in other contexts of Reinforcement Learning~\cite{tijsma2016comparing} and also in Continual Learning~\cite{serra2018overcoming}, this may lead to an overconfident network. Such a confident decision increases exploitation at the expense of exploration. In order to overcome this, we add a novel regularization term that reduces the Boltzmann energy of the logits. This term is shown to directly lead to an upper bound on the ratio between the maximal probability assigned by the model to an arm and the minimal probability.
	
	Our experiments are conducted along multiple axes.  First, we show that our method performs well on common stationary problems. Second, that it outperforms multiple leading baselines on non-stationary extensions of these problems. Third, that our generic method outperforms, in terms of convergence time, the leading methods for Rotting Bandits.  
	In a set of ablation experiments, we also demonstrate that the energy regularization method we advocate for, outperforms other exploration enhancing alternatives.

	\section{Related Work}
	\noindent{\bf Non-Stationary Multi-Arm Bandits\quad} Non-stationarity arises when considering time-dependent priors~\cite{besbes2014stochastic,garivier2011upper,wei2016tracking,allesiardo2017non,chen2021decision,cheung2022hedging} or when only a subset of the bandit's arms is available at a given time such as in the Sleeping Bandit~\cite{kanade2009sleeping,li2019combinatorial}. However, it also appears when there is no explicit time-dependency. For instance, when the reward of a given arm decays with each pull such as in the Rotting Bandit~\cite{levine2017rotting,seznec2019rotting} or when the rewards are received in delay~\cite{liu2019multi}. Contextual Bandits~\cite{li2010contextual,luo2018efficient,auer2019achieving} aim to relate between a given context and the bandit problem. It has been successfully applied for recommendation tasks~\cite{li2010contextual} and for a non-linear reward structure~\cite{krishnamurthy2016contextual}.

	\noindent{\bf Exploration Strategies In Neural Networks\quad}
	MAB problems require the balancing of exploration and exploitation.  The $\epsilon$-greedy method, explores with probability $\epsilon$~\cite{thrun1992efficient}. The softmax regularization smooths the probability distribution using a high temperature~\cite{tijsma2016comparing}. The Upper Confidence Bound was also used to draw from a smooth distribution~\cite{zhou2020neural}. 
	
	\section{Method}
	
	Let A be a set of $n$ possible actions, $\left\{ a_i \right\}_{i=1}^n$. At each time step $1\leq t \leq T$, a context, $c_t\in\mathbb{R}^{m}$, is presented to the observer and a reward, $r_{i,t}$, is assigned to each action. This framework applies both to cases in which the context is meaningful $m\!>\!0$ and in cases in which the context is void ($m\!=\!0$).
	
	In the bandit setting, only the reward $r_{i,t}$ associated with the action selected at time $t$ is presented to the observer. Let $\left\{a_{i_t}\right\}_{t=1}^T$ be the set of actions selected by the observer. The objective of the learner is to minimize the regret,
	$R = \sum_t \max_k \left\{r_{k,t}\right\}  - r_{i_t,t}$.  
	In order to select an action at time step $t$, a $l$-layer stacked GRU~\cite{cho2014learning}, $f$, with a time-dependent hidden state, $h_t$, of size $d$ is employed. The GRU receives as input the previous hidden state, $h_{t-1}$, together with the context vector, $c_t$, and updates the hidden state $h_t = f\left(c_t,h_{t-1} \right)$. 
	Next, an affine layer, $o$, acts on the hidden state, $h_t$, obtaining    $z_i = o\left( h_t \right)$, which is fed 
	to a Softmax layer, providing the probability $p_i$ of selecting action $a_i$.
	
	Under this setting, only partial information is available, as only the reward for action $i_t$ is revealed. Since the regret $R$ 
	is non-differentiable, we employ the  REINFORCE~\cite{williams1992simple} algorithm in order to minimize it. The loss function, $\mathcal{L}_R$, at time $t$ is $\mathcal{L}_R = -\left(r_{i_t,t} - \bar{r}_t \right) \log p_{i_t}$, 
	where $\bar{r}_t$ is the mean of the obtained rewards up to time $t$.
	
	Since the observer is unaware of the outcomes of other actions at time $t$, selecting the next action requires a subtle balance between exploration and exploitation. Whereas in neural networks, exploitation occurs naturally, we encourage exploration in three manners. First, we apply a dropout with probability $p_{dropout}$ after each GRU layer, thus adding uncertainty to the predicted actions. Second, we sample the next action by considering the probability, $p_i$, of the action and not by choosing the one with maximal $p_i$. Third, we add an energy conservation term to the loss function, thus regulating the values of the network's parameters, so the output of the classification layer cannot become biased.
	
	The Softmax function can be interpreted as the Boltzmann distribution, where each neuron $z_i$ in the classification layer carries an energy of $E_i = -z_i$.  The mean energy of a system of neurons is $\left\langle E \right\rangle=\sum_{i=1}^n  p_i E_i$. 
	In order to conserve energy, the following regularization term is added to the loss function,
	\begin{equation}
		\mathcal{L}_{EC} = \left\langle E \right\rangle^2 = \left( \sum_{i=1}^n p_{i} E_i  \right)^2  = \left( \sum_{i=1}^n p_{i} z_i  \right)^2   \,. \label{eq:const}
	\end{equation}
	Since $\mathcal{L}_{EC}$ is quadratic, it has a minimum only when $ \left\langle E \right\rangle = 0$. 
	This ensures that if an activation, $z_i$, is drifting to a high value (which results in a high probability, $p_i$, for selecting action $a_i$), $\mathcal{L}_{EC}$ will lead to a compensation by increasing the values of the other neurons $z_j$ as well.
	Specifically, the following bound holds.
	\newtheorem{thm}{Theorem}
	\begin{thm}[]
		Let $z_i$ be a set of $n$ logits that are passed to a Softmax to obtain probabilities $p_i$ and are sufficiently regularized by $\mathcal{L}_{EC}$. Denote by $p_{\max}$ and $p_{\min}$ the maximal and minimal values of $p_i$. Then $ \frac{p_{\max}}{p_{\min}} \leq e^{W\left(\frac{n-1}{e}\right) + 1}$, 
		where $W$ is the Lambert-W function defined as $W\left(x\right) e^{W\left(x\right)}  = x$.
	\end{thm}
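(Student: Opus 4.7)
Since Softmax is strictly monotone in its argument, one has $p_{\max}/p_{\min} = e^{z_{\max}-z_{\min}}$, so it suffices to bound the gap $\Delta := z_{\max}-z_{\min}$. I read ``sufficiently regularized by $\mathcal{L}_{EC}$'' as the constraint $\sum_i p_iz_i=0$, the unique zero of the quadratic penalty $\mathcal{L}_{EC}$.

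The plan is to reduce to a two-level extremal configuration for the logits and then to invert a scalar equation using the Lambert $W$ function. Concretely, I would first argue that, among logit vectors satisfying $\sum_i p_iz_i=0$, the gap $\Delta$ is extremized by a configuration in which one logit sits at $z_{\max}=M$ and the remaining $n-1$ logits are tied at $z_{\min}=-a$ for some $a>0$. A perturbation argument using
\begin{equation*}
\frac{\partial}{\partial z_k}\sum_i p_iz_i \;=\; p_k\bigl(1+z_k-\textstyle\sum_i p_iz_i\bigr)
\end{equation*}
shows that any intermediate logit can be moved toward the upper or lower cluster while the constraint is restored by a small compensating shift that only enlarges the gap. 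Under this two-level ansatz the constraint simplifies, after clearing the common partition function, to the scalar identity $Me^M = (n-1)\,a\,e^{-a}$.

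The function $a\mapsto ae^{-a}$ attains its global maximum $1/e$ at $a=1$, so the sharpest instance of this identity is $Me^M = (n-1)/e$, which by the defining relation $W(x)e^{W(x)}=x$ solves as $M = W((n-1)/e)$. Combined with $a=1$ this yields $\Delta = M+1 = W((n-1)/e)+1$, and hence the claimed bound $p_{\max}/p_{\min}\le e^{W((n-1)/e)+1}$.

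The main obstacle is legitimizing the selection $a=1$. The relation $Me^M=(n-1)\,ae^{-a}$ also admits a second branch with $a>1$ along which $M\to 0$ while $\Delta=M+a$ grows without bound; so a fully rigorous version of the bound requires an auxiliary constraint that effectively pins the compensation depth near the natural scale $a=1$. In my reading, this is what the informal hypothesis ``sufficiently regularized'' is doing, perhaps via an implicit bound on $|z_{\min}|$ inherited from elsewhere in the architecture, or via the direction of compensation at a joint stationary point of $\mathcal{L}_R+\mathcal{L}_{EC}$; making this explicit is the step that most deserves care in a fully rigorous write-up.
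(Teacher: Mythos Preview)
Your core argument tracks the paper's proof closely: read ``sufficiently regularized'' as the constraint $\sum_i p_i z_i = 0$, clear the partition function to obtain $\sum_i e^{z_i} z_i = 0$, exploit the global minimum $-1/e$ of $x\mapsto xe^x$ (attained at $x=-1$), and invert via the defining relation of $W$. The paper does not pass through your two-level extremal reduction; it argues directly that $e^{z_{\max}} z_{\max} = -\sum_{i\neq i_{\max}} e^{z_i} z_i \le (n-1)/e$ and then inverts on the increasing branch of $xe^x$ to get $z_{\max}\le W\bigl((n-1)/e\bigr)$. So your perturbation-to-two-levels step is extra machinery for a bound the paper obtains in one line.

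Where you are sharper than the paper is exactly the obstacle you flag. The paper finishes by writing $p_{\max}/p_{\min} = e^{z_{\max}}/e^{z_{\min}} \le e^{W((n-1)/e)}/e^{W(-1/e)}$, which tacitly uses $z_{\min}\ge W(-1/e)=-1$. But $e^{z_{\min}} z_{\min}\ge -1/e$ holds for \emph{every} real $z_{\min}$, so nothing in the argument forces $z_{\min}\ge -1$. Your ``second branch'' is precisely this failure mode: for $n=2$, take $z_{\max}=\varepsilon$ and $z_{\min}=-M$ with $\varepsilon e^{\varepsilon}=Me^{-M}$; as $M\to\infty$ one has $\varepsilon\to 0$, the constraint $\langle E\rangle=0$ is satisfied, yet $p_{\max}/p_{\min}=e^{\varepsilon+M}\to\infty$. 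The gap you identify is therefore real and is shared by the paper's own proof; without an additional hypothesis bounding $|z_{\min}|$, the stated inequality does not follow from $\langle E\rangle=0$ alone. Your reading that ``sufficiently regularized'' must be doing more work than merely enforcing $\mathcal{L}_{EC}=0$ is the correct diagnosis.
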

	\begin{proof}
		Denote by $z_{\max}$ and $z_{\min}$ the maximal and minimal $z_i$ values and by $i_{\max}$ the index of $z_{\max}$. It follows from $\sum_{i=1}^n p_{i} z_i =0$ 
		that
		\begin{equation}
			e^{z_{\max}}z_{\max} = -\sum_{i\neq i_{\max}} e^{z_i}z_i \leq -\left( n-1 \right) e^{z_{\min}}z_{\min} \,. \label{eq:zmin}
		\end{equation}
		The minimum of the function $y(x) = x e^x$ is  at $y=-\frac{1}{e}$ which means $e^{z_{\min}}z_{\min} \geq -\frac{1}{e}$. Plugging this to \eqref{eq:zmin}, $e^{z_{\max}}z_{\max} \leq  \frac{n-1}{e}$.
		Since $z_{\max}$ must be positive to satisfy  $\left\langle E \right\rangle = 0$, and because $xe^x$ is monotonic for $x>0$, $z_{\max} \leq W\left( \frac{n-1}{e}\right)$.  
		The ratio between the probabilities $p_{\max}$ and $p_{\min}$ is 
		\begin{equation}
			\nonumber
			\frac{p_{\max}}{p_{\min}} = \frac{e^{z_{\max}}}{e^{z_{\min}}} \leq \frac{e^{W\left(\frac{n-1}{e}\right)}}{e^{W\left(-\frac{1}{e}\right)}} = e^{W\left(\frac{n-1}{e}\right) + 1}\,. \eqno\qed
		\end{equation}
		\renewcommand{\qedsymbol}{}
	\end{proof}
	
	For small values of $n$, the bound grows almost linearly. 
	It limits the highest probability gap between the arms, thus allowing less-visited actions to be explored. 
	
	The loss terms are combined to obtain the optimization goal $\mathcal L= \mathcal L_R +\alpha_{EC} \mathcal{L}_{EC}$, 
	where $\alpha_{{EC}}$ is a regularization hyperparameter. The entire procedure procedure follows Alg.~\ref{alg}. 
	
	\renewcommand\algorithmiccomment[1]{%
		\hfill\rottriangle\ {#1}%
	}
	\begin{algorithm}[t]
		\caption{The Recurrent Bandit Algorithm}
		\label{alg}
		\begin{algorithmic}
			\renewcommand{\algorithmicrequire}{\textbf{Input:}}
			\renewcommand{\algorithmicensure}{\textbf{Output:}}
			\STATE $h_0 = 0$ \\
			\FOR {$t = 1$ to $T$}
			\STATE $h_t = f\left(c_t,h_{t-1}\right)$ \\
			\STATE $p_i = \text{Softmax}(o(h_t))$\COMMENT{Action probabilities}\\
			\STATE Select action $a_{i_t}$ by sampling $p_i$ \\
			\STATE Observe the reward $r_{i_t,t}$ \\ 
			\STATE $\mathcal{L} =  \mathcal{L}_R + \alpha_{EC}\mathcal{L}_{EC}$ \COMMENT{Combined loss term}\\
			\STATE Perform back-propagation minimizing $\mathcal{L}$
			\ENDFOR
		\end{algorithmic} 
	\end{algorithm}
	
	\begin{figure*}[t]
		\centering%
		\subfloat[]{%
			\centering%
			\includegraphics[width=0.25\linewidth]{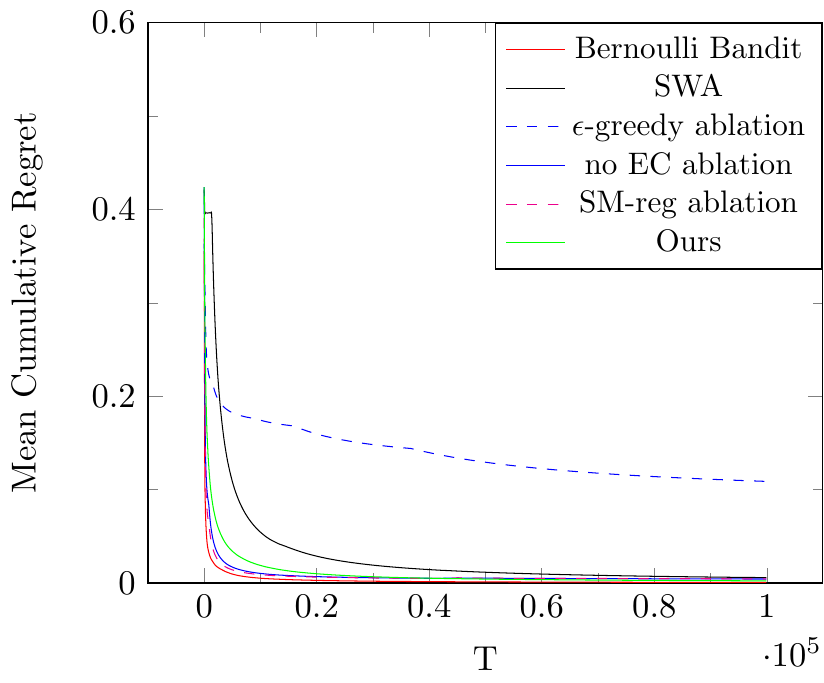}%
			\label{subfig:basic1}%
		}%
		\subfloat[]{%
			\centering
			\includegraphics[width=0.25\linewidth]{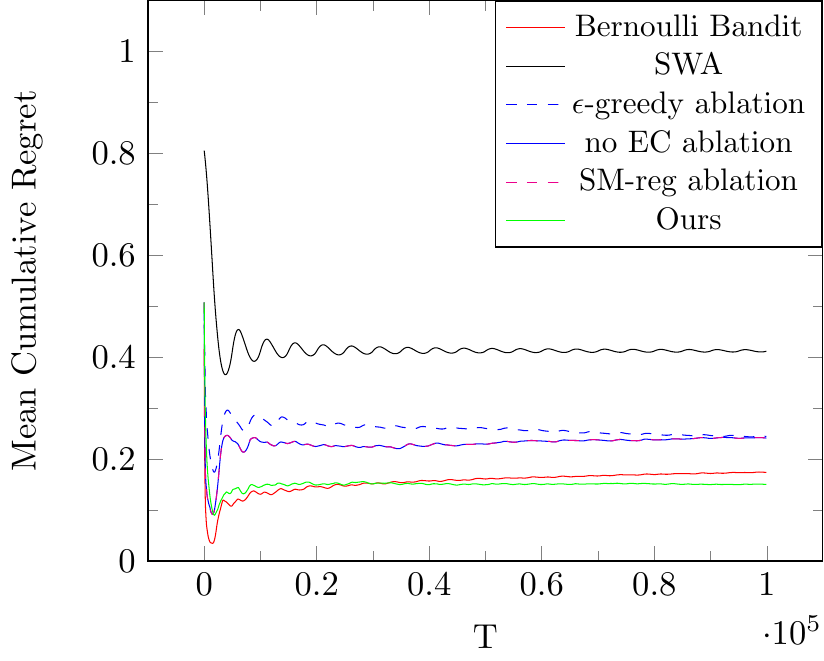}%
			\label{subfig:basic2}%
		}%
		\subfloat[]{%
			\centering%
			\includegraphics[width=0.25\linewidth]{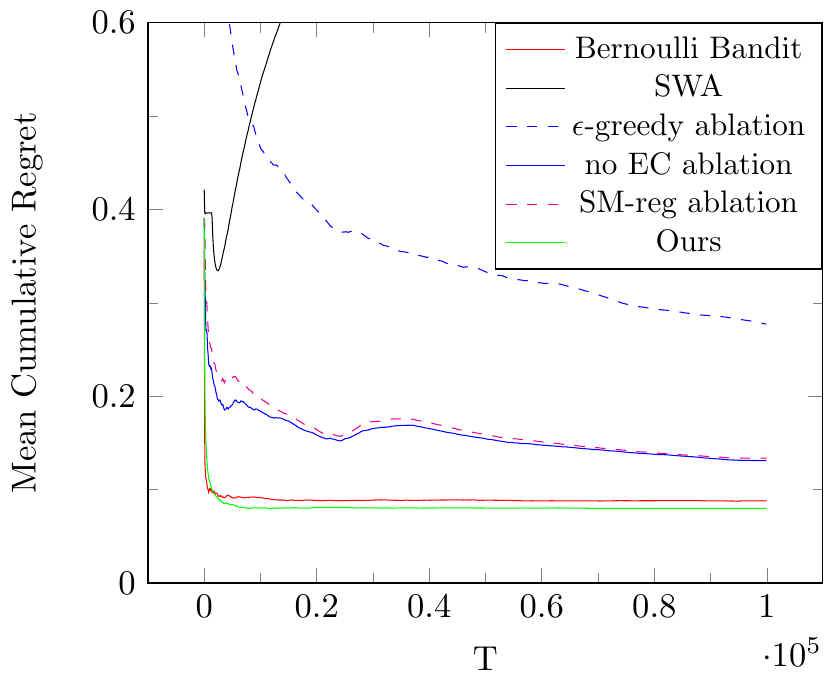}%
			\label{subfig:basic3}%
		}%
		\subfloat[]{%
			\centering%
			\includegraphics[width=0.25\linewidth]{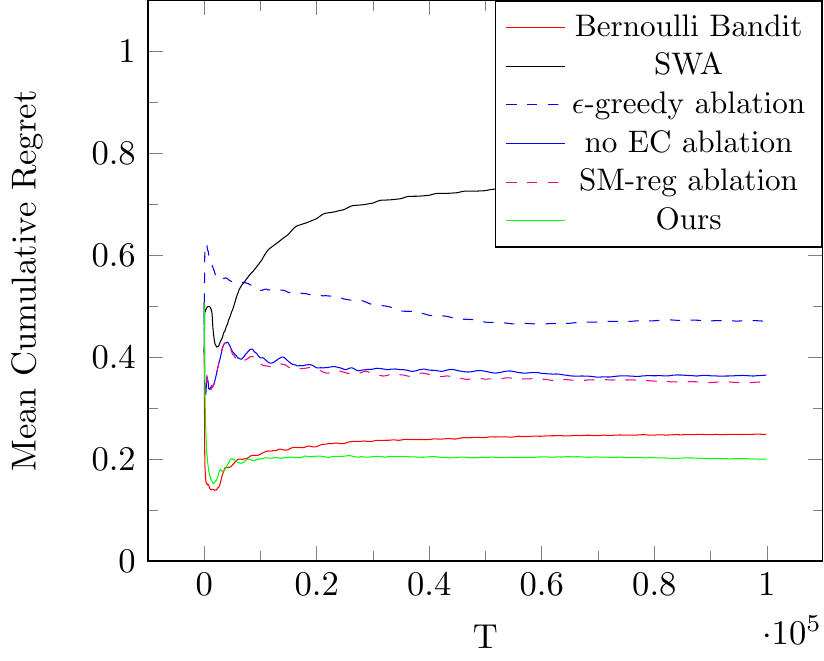} %
			\label{subfig:basic4}%
		}%
		\vspace{-2mm}
		\caption{%
			(a) A MAB with $10$ actions. (b) A sinusidial MAB with $10$ actions. (c) A MAB with $10$ actions, where the number of consecutive pulls of a handle is limited. (d) Sinusidial MAB with $10$ actions, limiting the number of consecutive pulls. }
		\label{fig:mabs}
	\end{figure*}
	
	\begin{figure}[t]
		\centering%
		\subfloat[]{%
			\centering%
			\includegraphics[width=0.39\linewidth]{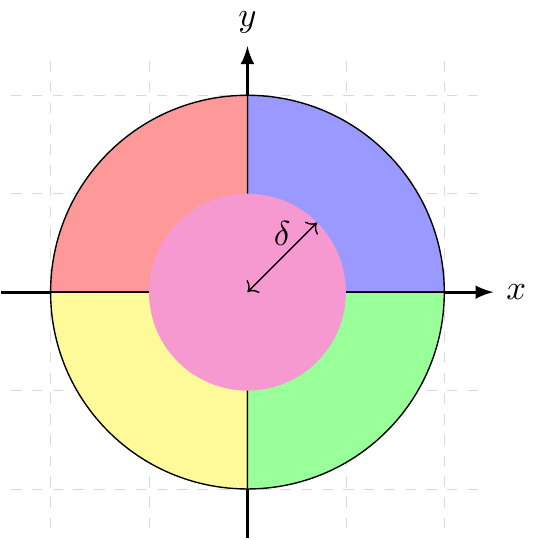}%
			\label{subfig:wheel1}%
		}%
		\hfill%
		\subfloat[]{%
			\centering
			\includegraphics[width=0.39\linewidth]{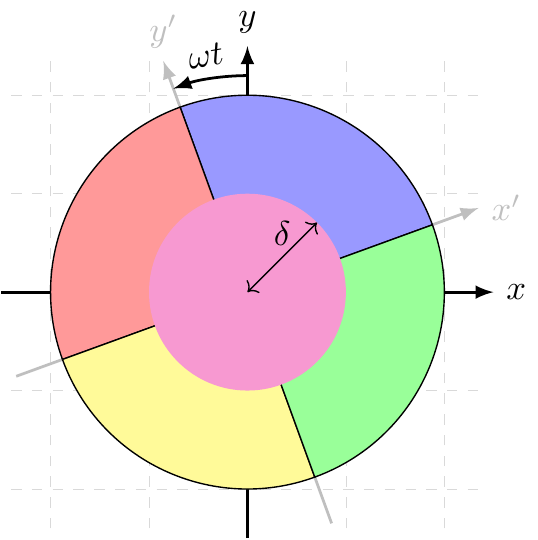}%
			\label{subfig:wheel2}%
		}%
		\caption{Conventional (a) and Rotating (b) Wheel Bandits. Each colored region describes a different reward distribution.}
		\label{fig:wheelsetup}
	\end{figure}

	\begin{figure}
		\centering%
		\subfloat[]{%
			\includegraphics[width=0.50\linewidth]{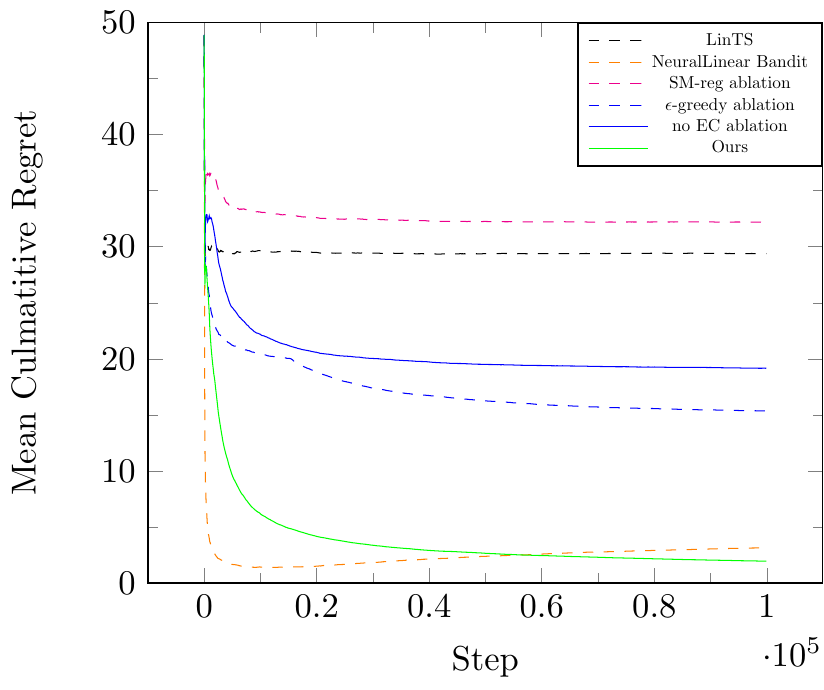}%
			\label{subfig:reswheel1}%
		}%
		\hfill
		\subfloat[]{%
			\includegraphics[width=0.5\linewidth]{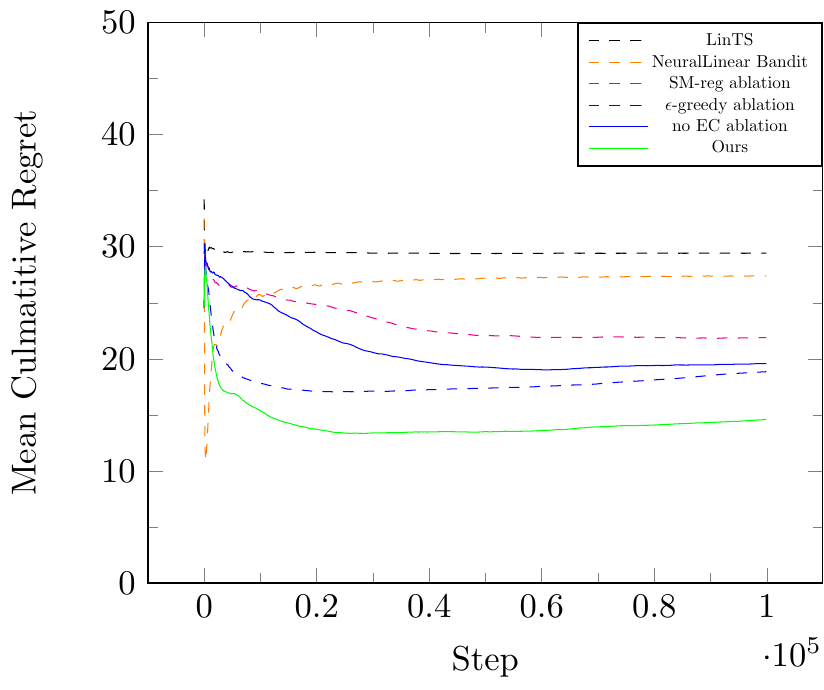}%
			\label{subfig:reswheel2}%
		}%
		\caption{(a) Wheel Bandit. (b) Rotating Wheel Bandit, $T_{\text{period}}=2000$. LinTS by \protect\cite{cortes2018adapting}, NeuralLinear by \protect\cite{riquelme2018deep}.}
		\label{fig:wheel}
	\end{figure}
	
	\begin{figure*}[t]
		\centering
		\begin{minipage}[t]{0.31\linewidth}
			\includegraphics[
			clip,width=0.90\linewidth]{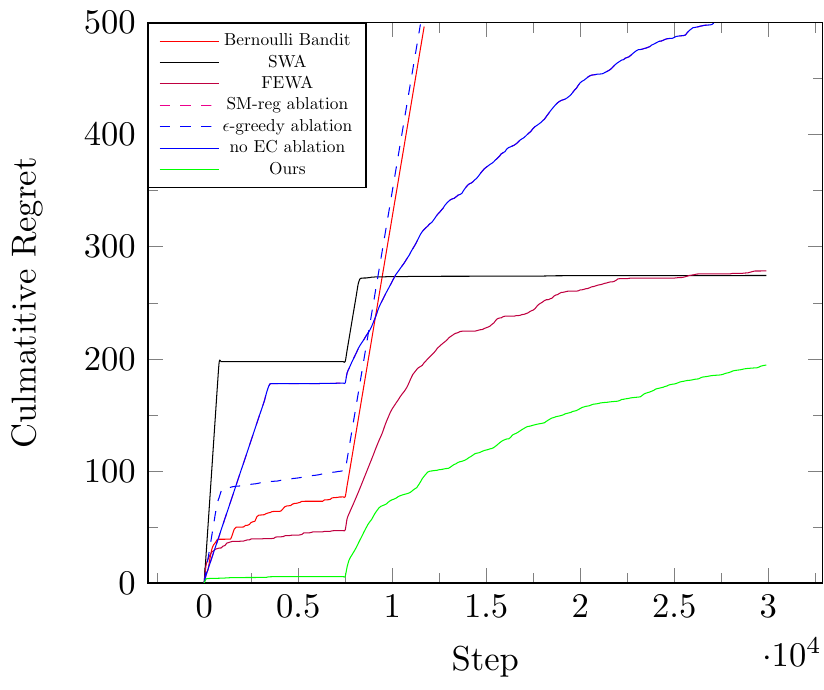}
			\vspace{-2mm}
			\caption{Cumulative regret for Rotting Bandits.}
			\label{fig:rotting}
		\end{minipage}%
		\hfill
		\begin{minipage}[t]{0.31\linewidth}
			\centering%
			\includegraphics[width=0.9\linewidth]{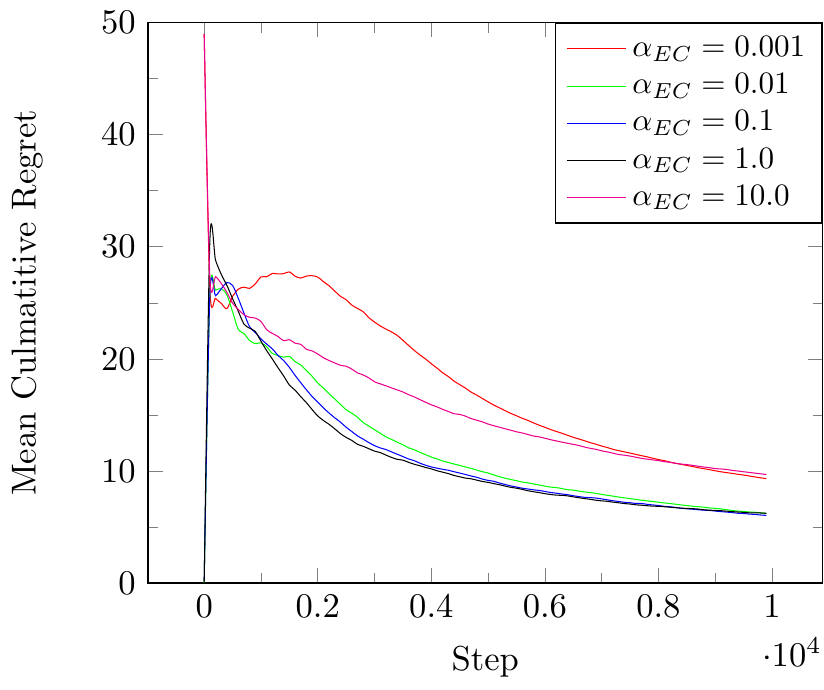}%
			\vspace{-2mm}
			\caption{Cumulative Regret on the stationary Wheel varying $\alpha_{EC}$.}
			\label{fig:senswheel1}
		\end{minipage}%
		\hfill
		\begin{minipage}[t]{0.31\linewidth}
			\centering%
			\includegraphics[width=0.9\linewidth]{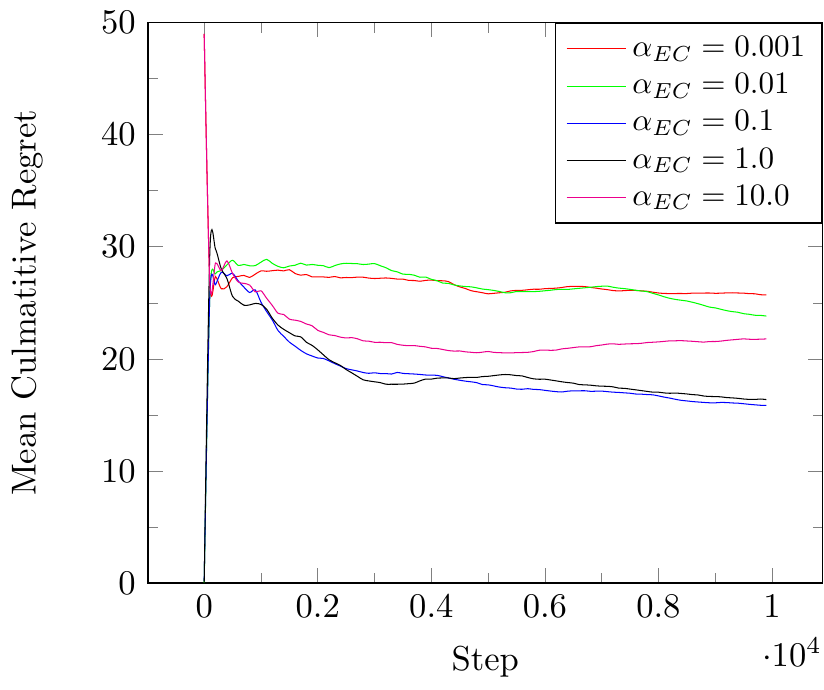}%
			\vspace{-2mm}
			\caption{ Cumulative Regret on the Rotating Wheel dataset for different  $\alpha_{EC}$.}
			\label{fig:senswheel2}   \end{minipage}%
	\end{figure*}
	\section{Experiments}
	\label{sec:experiments}
	
	We apply our method on a variety of different multi-arm bandit benchmarks. For all the tasks, except for the Wheel Bandit, 
	the context vector, $c_t$, that is fed to the Recurrent Bandit is null. 
	We optimize our method using RMSprop~\cite{bengio2015rmsprop} with a learning rate of $0.001$. The network $f$ for all the experiments is a 2-layer stacked GRU, each with a hidden size of $d_h = 128$. The dropout used for all the experiments is $p_{\text{dropout}}=0.1$, except for the Rotting Bandits, where it is set to $p_{\text{dropout}}=0$, since it does not require much exploration due to its nature. The regularization parameter is fixed at a value of $\alpha_{EC}=0.1$ for all the benchmarks. 
	
	An ablation study is performed to verify the benefits of using the novel loss term $\mathcal{L}_{EC}$. $\epsilon$-greedy ablation uses the same GRU architecture, however at each time step chooses a random action with a probability of $\epsilon=0.01$, and the most probable action with probability $1-\epsilon$. The Softmax-reg ablation uses Softmax regularization to smooth the action selection using a temperature of $T=2$. The no EC ablation is obtained by setting $\alpha_{EC}=0$. 
	
	\noindent{\bf Bernoulli Multi-Arm Bandit\quad}
	A set of $n$ handles or actions, are assigned with a uniform prior $\{\bar{\theta}_k\}_{k=1}^n$. At each time step, the user has to pick one action. each of the actions carries a reward, $r_k\in \left\{0,1 \right\}$, sampled from the Bernoulli's distribution, $    r_k \sim Bernoulli\left( \bar{\theta}_k \right)$.
	We compare our method to the Bernoulli Bandit~\cite{chapelle2011empirical}, which is based on Thompson-Sampling, to SWA~\cite{levine2017rotting} that was designed for the Rotting Bandit problem, and to the three ablation variants of our method. Fig.~\ref{subfig:basic1} shows the mean accumulated regret for $100$ simulations for each of the methods. As can be seen, all methods are able to reduce the mean accumulated regret over time. However, SWA takes longer to accomplish this. Our method works best in this case if regularization is turned off, and its regret over the Bernoulli Bandit is not large.

	\noindent{\bf Time Dependent Multi-Arm Bandit\quad}
	One way to create a non-stationary bandit problem is to add time-dependent priors to the Bernoulli Multi-Arm Bandit task. To accomplish this, time dependency is inserted to the priors $\bar{\theta}_k$ using the mirrored sine function      $\theta_k \left( t \right) =  \bar {\theta}_k \left\vert \sin \left( \omega t + \phi_k \right) \right\vert$, 
	where $\omega = \frac{2\pi}{T_{\text{period}}}$ and $\phi_k = \frac{2 \pi k}{n}$ denotes the phase between the arms, so that the arm holding the highest reward switches its place through time. For this experiment, there are $n=10$ available arms and a periodicity of ${T}_{\text{period}} = 10000$. The mean accumulated regret is presented in Fig.~\ref{subfig:basic2}. As can be seen, the drifting of the rewards over time makes it hard for both the Bernoulli Bandit, and the baselines without the Energy Conservation to keep a low regret. The SWA algorithm fails to solve this task, since it was designed specifically for decaying rewards. The variant of our methods that are meant to evaluate other forms of encouraging exploration also fail (this is also true for all other values we tested for their regularization parameter).
	
	\noindent{\bf Time Dependent Multi-Arm Bandit with Correlative Arms\quad}
	In order to further investigate the non-stationary behavior, we limit the number of consecutive selections of a single arm to $10$. After an arm has been selected for $10$ times in a row, its reward is zeroed unless another arm has been selected.  The results of applying this setting to the Bernoulli MAB problem are shown in Fig.~\ref{subfig:basic3}. In this case, only the Bernoulli Bandit and our method maintain a low regret. Under this setting, SWA is not able to solve this task. 
	
	We then combine the two MAB variants and zero the reward after $10$ consecutive pulls for the Time Dependent MAB. The results are shown in Fig.~\ref{subfig:basic4}. Our method is the only one to maintain a low mean cumulative regret. The Bernoulli Bandit regret increases with time, and the ablation variants are not able to converge to low solution.

	\noindent{\bf Wheel Bandit\quad}
	The Wheel Bandit by \cite{riquelme2018deep} has been introduced to examine the expressiveness of contextual bandits in non-linear tasks. The context  $c_t = \left(x_t,y_t \right) \in \mathbb{R}^2$ in this task is uniformly sampled in the unit circle ($R=1$). In this settings, five actions, $a_i$, are available. The first action, $a_1$, always grants a reward, $r\sim \mathcal{N} \left(\mu_1, \sigma \right)$ independent of the context, $c_t$. Inside the region $d = \sqrt{x^2 + y^2} \leq \delta$, the other four available actions, $a_{2-5}$, grant a reward of $r\sim \mathcal{N} \left(\mu_2, \sigma \right)$, for $\mu_2 < \mu_1$. For the region $d > \delta$, depending on the signs on $x$ and $y$ (to which quarter of $\mathbb{R}^2$ they belong to) one of the four arms grants a reward of $r\sim \mathcal{N} \left(\mu_3, \sigma \right)$, where $\mu_3 \gg \mu_1$, whereas the remaining arms grant a reward of  $r\sim \mathcal{N} \left(\mu_2, \sigma \right)$. We use the same settings as the original Wheel bandit, $\mu_1 = 1.2,\mu_2=1.0,\mu_3 = 50,\sigma=0.01$, 
	with an exploration motivating parameter $\delta = 0.5$. 
	
	To make this problem non-stationary, a time-dependent rotation is applied by making the actions' reward distributions depend on the sign of $x'$ and $y'$,
	\begin{equation}
		\left(\begin{array}{c} x' \\ y' \end{array} \right) = \left(\begin{array}{cc} \cos\left(\omega t \right) & \sin\left(\omega t \right) \\ -\sin\left(\omega t \right) & \cos\left(\omega t \right) \end{array} \right)\left(\begin{array}{c} x \\ y \end{array} \right) \,.
	\end{equation}
	Both setups are depicted in Fig.~\ref{fig:wheelsetup}.
	We run both setups for $T=10000$, and for the Rotating Wheel Bandit, we use an $\omega=\frac{2\pi}{2000}$ ($5$ rotations). As can be seen in Fig.~\ref{fig:wheel}(a), our method leads over all baseline methods for the conventional problem, outperforming the previous SOTA NeuralLinear~\cite{riquelme2018deep}. when time-dependency is introduced, NeuralLinear quickly diverges, whereas our method is still able to learn the dynamics, as can be seen in Fig.~\ref{fig:wheel}(b).

	\noindent{\bf Rotting Bandit\quad } The Rotting Bandit was introduced  to handle cases where the Bandit's arms are not stationary, but rather decay over time with each pull~\cite{levine2017rotting}. We follow the original scenario, by considering two available actions, $a_1$ and $a_2$, for $T=30000$ time steps, with rewards, $r_i$, sampled from,
	\begin{eqnarray}
		r_1 &\sim& \mathcal{N}\left(\mu_1=0.5,\sigma^2=0.2 \right) \\
		r_2 &\sim& \left\{ \begin{array}{lc} \mathcal{N}\left(\mu_2=1,\sigma^2=0.2 \right) & n_2 < 7500 \\ 
			\mathcal{N}\left(\mu_2=0.4,\sigma^2=0.2 \right) & else \end{array}\right. \,,
	\end{eqnarray}
	where $n_i$ is the number of times action $a_i$ has been selected. For this experiment, the regret presented is the Policy Regret~\cite{arora2012online}, since the optimal policy~\cite{levine2017rotting} for this task is,
	$a_{i_t} = \argmax_i\left\{ r_i \left( N_{i,t} + 1\right)\right\}$, 
	where $N_{i,t}$ is the number of times action $a_i$ was selected up to time $t$, and $r_i\left(n_i\right)$ is the reward given by action $a_i$.
	
	We compare our method to SWA~\cite{levine2017rotting},  FEWA~\cite{seznec2019rotting} (two Rotting Bandit methods), and to the Vanilla Bernoulli Bandit. The cumulative policy regret obtained by these methods averaged for $10$ simulations is shown in Fig.~\ref{fig:rotting}. Both the Bernoulli Bandit and all the GRU baselines fail to solve this task. Our method converges much faster than SWA and FEWA.  In contrast with SWA and FEWA, our method assumes nothing about the reward distribution, and therefore has to continuously explore. SWA and FEWA were designed to solve tasks with decaying rewards, and therefore do not require further exploration after a certain point. This is the reason our method keeps on accumulating regret, unlike SWA and FEWA that converge to the exact solution. 
	
	\noindent{\bf Sensitivity to the regularization parameter $\alpha_{EC}$} is examined   
	in Fig.~\ref{fig:senswheel1} for the Wheel Bandit and in Fig~\ref{fig:senswheel2} for the Rotating Wheel Bandit. 
	The performance is stable in both cases. 

	\section{Conclusions}
	
	The rewards obtained for a specific action are very often time dependent. In many cases, they are also dependent on the previous actions that were taken. In this work, we propose to employ an RNN in order to solve the non-stationary bandit problem. Our solution is general enough to model the contextual bandit, and is robust enough to address a diverse set of contextless non-stationary tasks, as well as stationary cases. 
	
	We note that training without proper regularization results in an overconfident learner which is incapable of exploring efficiently. Such an exploration is especially crucial when the rewards vary over time. We, therefore, suggest a new regularization term that minimizes the Boltzmann energy. This term leads to a bounded gap between the maximal and minimal probability assigned to each arm. Our experiments show that our method addresses multiple non-stationary rewards that vary according to time and also by action. 
	
	\paragraph*{Acknowledgments}
	This project was supported by a grant from the Tel Aviv University Center for AI and Data Science (TAD). The contribution of the first author is part of a Ph.D. thesis research conducted at Tel Aviv University.
	\bibliographystyle{IEEEbib}
	\bibliography{mab}

\begin{thebibliography}{10}

\bibitem{thompson1933likelihood}
William~R Thompson,
\newblock ``On the likelihood that one unknown probability exceeds another in
  view of the evidence of two samples,''
\newblock {\em Biometrika}, 1933.

\bibitem{levine2017rotting}
Nir Levine, Koby Crammer, and Shie Mannor,
\newblock ``Rotting bandits,''
\newblock in {\em Advances in neural information processing systems}, 2017, pp.
  3074--3083.

\bibitem{langford2007epoch}
John Langford and Tong Zhang,
\newblock ``The epoch-greedy algorithm for contextual multi-armed bandits,''
\newblock in {\em NIPS}, 2007.

\bibitem{tijsma2016comparing}
Arryon~D Tijsma, Madalina~M Drugan, and Marco~A Wiering,
\newblock ``Comparing exploration strategies for q-learning in random
  stochastic mazes,''
\newblock in {\em 2016 IEEE Symposium Series on Computational Intelligence},
  2016.

\bibitem{serra2018overcoming}
Joan Serra, Didac Suris, Marius Miron, and Alexandros Karatzoglou,
\newblock ``Overcoming catastrophic forgetting with hard attention to the
  task,''
\newblock in {\em ICML}, 2018.

\bibitem{besbes2014stochastic}
Omar Besbes, Yonatan Gur, and Assaf Zeevi,
\newblock ``Stochastic multi-armed-bandit problem with non-stationary
  rewards,''
\newblock in {\em NIPS}, 2014.

\bibitem{garivier2011upper}
Aur{\'e}lien Garivier and Eric Moulines,
\newblock ``On upper-confidence bound policies for switching bandit problems,''
\newblock in {\em COLT}, 2011.

\bibitem{wei2016tracking}
Chen-Yu Wei, Yi-Te Hong, and Chi-Jen Lu,
\newblock ``Tracking the best expert in non-stationary stochastic
  environments,''
\newblock {\em NIPS}, 2016.

\bibitem{allesiardo2017non}
Robin Allesiardo, Rapha{\"e}l F{\'e}raud, and Odalric-Ambrym Maillard,
\newblock ``The non-stationary stochastic multi-armed bandit problem,''
\newblock {\em International Journal of Data Science and Analytics}, vol. 3,
  no. 4, 2017.

\bibitem{chen2021decision}
Lili Chen, Kevin Lu, Aravind Rajeswaran, Kimin Lee, Aditya Grover, Misha
  Laskin, Pieter Abbeel, Aravind Srinivas, and Igor Mordatch,
\newblock ``Decision transformer: Reinforcement learning via sequence
  modeling,''
\newblock {\em Advances in neural information processing systems}, vol. 34, pp.
  15084--15097, 2021.

\bibitem{cheung2022hedging}
Wang~Chi Cheung, David Simchi-Levi, and Ruihao Zhu,
\newblock ``Hedging the drift: Learning to optimize under nonstationarity,''
\newblock {\em Management Science}, vol. 68, no. 3, pp. 1696--1713, 2022.

\bibitem{kanade2009sleeping}
Varun Kanade, H~Brendan McMahan, and Brent Bryan,
\newblock ``Sleeping experts and bandits with stochastic action availability
  and adversarial rewards,''
\newblock {\em Int. Conf. Artificial Intelligence and Statistics}, 2009.

\bibitem{li2019combinatorial}
Fengjiao Li, Jia Liu, and Bo~Ji,
\newblock ``Combinatorial sleeping bandits with fairness constraints,''
\newblock {\em IEEE Transactions on Network Science and Engineering}, 2019.

\bibitem{seznec2019rotting}
Julien Seznec, Andrea Locatelli, Alexandra Carpentier, Alessandro Lazaric, and
  Michal Valko,
\newblock ``Rotting bandits are no harder than stochastic ones,''
\newblock in {\em The 22nd International Conference on Artificial Intelligence
  and Statistics}, 2019, pp. 2564--2572.

\bibitem{liu2019multi}
Larkin Liu, Richard Downe, and Joshua Reid,
\newblock ``Multi-armed bandit strategies for non-stationary reward
  distributions and delayed feedback processes,''
\newblock {\em arXiv preprint arXiv:1902.08593}, 2019.

\bibitem{li2010contextual}
Lihong Li, Wei Chu, John Langford, and Robert~E Schapire,
\newblock ``A contextual-bandit approach to personalized news article
  recommendation,''
\newblock in {\em Proceedings of the 19th international conference on World
  wide web}, 2010, pp. 661--670.

\bibitem{luo2018efficient}
Haipeng Luo, Chen-Yu Wei, Alekh Agarwal, and John Langford,
\newblock ``Efficient contextual bandits in non-stationary worlds,''
\newblock in {\em Conference On Learning Theory}, 2018, pp. 1739--1776.

\bibitem{auer2019achieving}
Peter Auer, Yifang Chen, Pratik Gajane, Chung-Wei Lee, Haipeng Luo, Ronald
  Ortner, and Chen-Yu Wei,
\newblock ``Achieving optimal dynamic regret for non-stationary bandits without
  prior information,''
\newblock in {\em Conference on Learning Theory}, 2019, pp. 159--163.

\bibitem{krishnamurthy2016contextual}
Akshay Krishnamurthy, Alekh Agarwal, and Miro Dudik,
\newblock ``Contextual semibandits via supervised learning oracles,''
\newblock in {\em Advances In Neural Information Processing Systems}, 2016, pp.
  2388--2396.

\bibitem{thrun1992efficient}
Sebastian~B Thrun,
\newblock ``Efficient exploration in reinforcement learning,''
\newblock 1992.

\bibitem{zhou2020neural}
Dongruo Zhou, Lihong Li, and Quanquan Gu,
\newblock ``Neural contextual bandits with ucb-based exploration,''
\newblock in {\em ICML}, 2020.

\bibitem{cho2014learning}
Kyunghyun Cho, Bart van Merrienboer, {\c{C}}aglar G{\"u}l{\c{c}}ehre, Dzmitry
  Bahdanau, Fethi Bougares, Holger Schwenk, and Yoshua Bengio,
\newblock ``Learning phrase representations using rnn encoder-decoder for
  statistical machine translation,''
\newblock in {\em EMNLP}, 2014.

\bibitem{williams1992simple}
Ronald~J Williams,
\newblock ``Simple statistical gradient-following algorithms for connectionist
  reinforcement learning,''
\newblock {\em Machine learning}, 1992.

\bibitem{cortes2018adapting}
David Cortes,
\newblock ``Adapting multi-armed bandits policies to contextual bandits
  scenarios,''
\newblock {\em arXiv preprint arXiv:1811.04383}, 2018.

\bibitem{riquelme2018deep}
Carlos Riquelme, George Tucker, and Jasper Snoek,
\newblock ``Deep bayesian bandits showdown: An empirical comparison of bayesian
  deep networks for thompson sampling,''
\newblock {\em arXiv preprint a1802.09127}, 2018.

\bibitem{bengio2015rmsprop}
Yoshua Bengio,
\newblock ``Rmsprop and equilibrated adaptive learning rates for nonconvex
  optimization,''
\newblock {\em corr abs/1502.04390}, 2015.

\bibitem{chapelle2011empirical}
Olivier Chapelle and Lihong Li,
\newblock ``An empirical evaluation of thompson sampling,''
\newblock in {\em Advances in neural information processing systems}, 2011.

\bibitem{arora2012online}
Raman Arora, Ofer Dekel, and Ambuj Tewari,
\newblock ``Online bandit learning against an adaptive adversary: from regret
  to policy regret,''
\newblock {\em arXiv preprint 1206.6400}, 2012.

\end{thebibliography}
	
\end{document}